\newtheorem{theorem}{Theorem}
\newtheorem{definition}{Definition}
\newtheorem{corollary}{Corollary}
\newcommand*{\Scale}[2][4]{\scalebox{#1}{$#2$}}
\newcommand{\minus}{\scalebox{0.75}[1.0]{$-$}}
\newcommand{\pushcode}[1][1]{\hskip\dimexpr#1\algorithmicindent\relax}
\newcommand{\multicol}[2]{\IEEEeqnarraymulticol{#1}{#2}}
\begin{document}

\twocolumn[

\aistatstitle{Adaptive, Distribution-Free Prediction Intervals for Deep 
Networks}

\aistatsauthor{ Danijel Kivaranovic \And 
                Kory D. Johnson \And  
                Hannes Leeb }

\aistatsaddress{  ISOR \And  
                  Institute for Statistics and Mathematics\And 
                  ISOR, DS@UniVie } 

\vspace{-.75cm}

\aistatsaddress{  University of Vienna \And  
                  Vienna University of Economics and Business \And 
                  University of Vienna} 

]

\begin{abstract}
The machine learning literature contains several constructions for prediction
intervals that are intuitively reasonable but ultimately ad-hoc in that they 
do not come with provable performance guarantees. We present methods from the 
statistics literature that can be used efficiently with neural networks 
under minimal assumptions with guaranteed performance. We propose a neural 
network that outputs three values instead of a single point estimate and optimizes 
a loss function motivated by the standard quantile regression loss. We 
% go beyond merely motivating the construction of these networks and 
provide two prediction
interval methods with finite sample coverage guarantees solely under the assumption
that the observations are independent and identically distributed. The first 
method leverages the conformal inference framework and provides average coverage. 
The second method provides a new, stronger guarantee by conditioning on the
observed data.
%There is a growing literature on using and improving
%the predictive accuracy of deep networks, but this lacks a concomitant 
%improvement in the quantification of their uncertainty. We provide a 
%prediction
%interval network (PI-Network) which is a transparent, tractable modification of
%the standard predictive loss used to train deep networks. The PI-Network
%outputs three values instead of a single point estimate and optimizes a 
%loss function inspired by quantile regression. 
% Furthermore, our intervals adapt to heteroskedasticity and asymmetry in the
% conditional distribution of the response given the covariates. 
Lastly, our loss function does not compromise the
predictive accuracy of the network like other prediction interval methods. We
demonstrate the ease of use of our procedures as well as its improvements over
other methods on both simulated and real data. As most deep networks can 
easily be modified by our method to output predictions with valid prediction 
intervals, its use
should become standard practice, much like reporting standard errors along with
mean estimates.
\end{abstract}

\section{Introduction}
Deep neural networks have gained tremendous popularity in the last decade due
to their superior predictive performance over other machine learning
algorithms. They have become the state-of-the-art algorithm in many challenging
tasks such as computer vision \citep{krizhevsky2012,karpathy2014}, speech 
recognition \citep{hinton2012}, natural language processing 
\citep{collobert2008}, and bioinformatics \citep{alipanahi2015}.
Despite these successes, there is a paucity of research on the uncertainty of
neural network predictions on new samples. 

The development of accurate prediction intervals (PIs) for neural networks is 
a challenging task that is only beginning to gain research interest.
% Loosely speaking, the goal is to produce
% an interval that contains the response with some prespecified probability.
Several authors have provided motivation for modified loss functions intended
to encourage desirable properties
\citep{nix1994,heskes1997,khosravi2011,LakPBak17,Pearce+18,KerenCS18,TagL18,
kuleshov2018}.
That being said, some provide a PI without a point estimate \citep{Pearce+18, 
KerenCS18} or use loss functions which cannot be optimized with stochastic 
gradient descent \citep{khosravi2011}. Others have distributional assumptions 
\citep{nix1994,LakPBak17} or provide intervals in which the lower 
bound is not guaranteed to be smaller than the upper bound \citep{TagL18}. These 
methods also do not provide rigorous guarantees that are desired for a PI. 
Uncertainty estimation of neural networks from Bayesian perspective has been 
analyzed by \citep{Mac92, GalG16}.

%\noteKH{other citations, previous reviewers}

The statistics literature is full of confidence interval constructions with 
provable performance guarantees; however, these methods often 
cannot be used efficiently with modern learning machines. We use variations
of well-known statistical techniques to propose two neural-network-based 
PIs with valid coverage claims that outperform existing methods. The core of 
both procedures is a neural network that outputs 
three values and that optimizes a modified quantile regression loss function.
The first method is based on the conformal method introduced by \citep{Vovk05}.
\citet{Pap11} applied standard conformal inference to neural networks to 
construct prediction intervals with finite sample coverage guarantee. These
intervals are not specific to neural networks and do not leverage their 
ability to adapt to a wider class of distributions. We propose a new conformity 
score that is tailored to neural networks that fully exploits their 
flexibility. A related conformity score was simultaneously proposed by 
\citet{romano2019}.
%, though there are some differences as discussed in Section \ref{sec:contribution}.

In practice, it is often desirable to have a stronger coverage guarantee 
than average coverage. We propose a second neural-network-based PI with 
an approximate conditional coverage claim, where we condition on the observed
data. Both proposed procedures use a sample splitting 
strategy, where one part of the data is used to fit a network that outputs 
predictions and intervals while the other part is used to adjust these 
intervals to be valid PIs.

In Section \ref{sec:stats}, we formally set up the prediction problem and
explain in detail what we mean by valid PIs. This includes standard concepts
such as average coverage but also the new approximate conditional coverage 
claim which we term
Probably-Approximately-Valid (PAV). Section \ref{sec:PI-NN} introduces 
neural-network-based PIs
and the algorithms to produce valid prediction intervals. Extensive
simulations and real-data examples in Section \ref{sec:simulation}
highlight the improvements our methods make over competing algorithms:
Other methods either fail to provide adequate
coverage, compromise predictive accuracy, or fail to account for
heteroskedasticity. 
% Furthermore, we show via simulations that our proposed
% methods have asymptotically nearly optimal performance.

\subsection{Prediction Intervals} 
\label{sec:stats}

We consider a non-parametric regression setting, where $X$ denotes the
$\mathbb{R}^d$-valued covariate vector and $Y$ the $\mathbb{R}$-valued
response. The data set $D = (X_i,Y_i)_{1\leq i \leq n}$ contains $n$ i.i.d.
copies of the random variable $(X,Y)$. Throughout the paper, we will 
write $(X,Y)$ as new random variables that are independent of our data
$D$.

% There are various formal criteria that prediction intervals could satisfy. 
In general, a prediction interval $\Gamma_\alpha(X) = \Gamma_{D,\alpha}(X)$ is 
an interval-valued function of $X$, the data $D$, and the confidence level 
$\alpha$, such that, loosely speaking, a new observation falls within the 
interval with probability at least $1-\alpha$. This loose definition can be made 
precise in various ways. 
\begin{definition}
  The prediction interval $\Gamma_\alpha(X)$ controls average coverage if
  \begin{equation}
    \mathbb{P}(Y \in \Gamma_\alpha(X)) \geq 1-\alpha.
    \label{eqn:aveCov}
  \end{equation}
\end{definition}
Note that the probability in equation (\ref{eqn:aveCov}) is over all of the
random variables included: the new observation ($X,Y$) \emph{and the data} $D$.
% As such, $\Gamma_{\alpha}(X)$ provides $(1-\alpha)100\%$-confidence
% in the statistical sense: it is the result of a method such that in repeated
% applications on independent data sets, $(1-\alpha)100\%$ of the new
% realizations satisfy $Y \in \Gamma_\alpha(X)$. \citet{ShafV08} 
% provide a cogent discussion of the difference between this and related 
% notions as it pertains to conformal inference. 
In Section \ref{sec:PI_conformal}, 
we propose a neural-network-based inference procedure that controls average 
coverage.

One may object to average coverage because one would like a coverage guarantee
given a particular data set instead of averaging over all potential data sets.
Our second prediction interval method takes steps toward alleviating this
concern. We define a notion of prediction interval validity called Probably
Approximately Valid (PAV), which is inspired by the theory of probably
approximately correct (PAC) learning \citep{valiant1984}. A task is
PAC-learnable if, loosely speaking, regardless of the data generating
distribution, one can approximate the task arbitrarily well with high
probability given sufficient data. Similarly, a PAV interval provides a 
conditional coverage guarantee with high probability regardless of the data 
generating distribution. To this end, let $p(\Gamma_{\alpha}|D)$ denote the 
conditional probability that $\Gamma_{\alpha}(X)$ contains $Y$ conditional on
$D$, i.e.,
\begin{equation*}
  p(\Gamma_{\alpha}|D)=\mathbb{P}(Y \in \Gamma_{\alpha}(X) | D).
\end{equation*}
\begin{definition}
\label{def:pav}
  The prediction interval $\Gamma_\alpha(X)$ is \emph{Probably Approximately
  Valid (PAV)} if for all $\epsilon > 0$, all $\delta >0$ and all
  $n \geq n_0(\epsilon, \delta)$,
\begin{equation}
  \mathbb{P}\left(p(\Gamma_{\alpha}|D) \geq 1-\alpha - \epsilon\right) \geq 1- 
\delta. \label{eqn:pav}
\end{equation}
\end{definition}
This means, $\Gamma_{\alpha}(X)$ is PAV if the conditional probability of
$\Gamma_{\alpha}(X)$ covering $Y$ given $D$ is at least $1-\alpha-\epsilon$ for
all but $\delta100$\% of data sets $D$. 
% Ideally, the required sample size, $n_0(\epsilon, \delta)$, is not too large 
% when $\epsilon$ and $\delta$ are small. 
In Section \ref{sec:PI_selection}, we provide a greedy algorithm that selects 
a neural-network-based PI that is PAV such that
$n_0(\epsilon, \delta)$ is of order $\minus\log(\delta)/\epsilon^2$.
% Note that average coverage does not necessarily guarantee such a conditional 
% coverage statement and vice versa. Yet, i
Corollary \ref{cor:pav} shows that PAV prediction intervals can also 
control average coverage with proper choice of $\alpha$, $\epsilon$, and $\delta$.

Both proposed PIs are the result of a prediction-interval-specific neural network 
that outputs 
a three-dimensional vector and optimizes a loss function used in quantile 
regression. The next subsection describes and motivates this loss function.

\subsection{The PI-specific loss function} \label{sec:loss}

Let $N:\mathbb{R}^d \to \mathbb{R}^3$ be a network such that 
$N(x)=(l(x),m(x),u(x))$, where $l,m,u:\mathbb{R}^d \to \mathbb{R}$ with the 
restriction that $l(x) \leq m(x) \leq u(x)$ for all $x \in \mathbb{R}^d$. We use 
$m(x)$ to estimate the median of $Y$ given $X$, while $l(x)$ and $u(x)$ estimate 
the lower and upper bounds of our PI, respectively. The 
monotonicity, $l(x) \leq m(x) \leq u(x)$, is easily enforced by modifying any 
network that outputs a triple ($z_1,z_2,z_3$) to output ($z_1', z_2', z_3'$) 
given by $z_1' = z_1$, $z_2' = z_1' + \text{ReLU}(z_2-z_1')$, and $z_3' = z_2' + 
\text{ReLU}(z_3-z_2')$. Here, ReLU$(\cdot)$ is the rectified linear unit, 
$\max(0, \cdot)$. 

For $\tau \in [0,1]$ and $u \in \mathbb{R}$, let
$h_{\tau}(u) = (\tau-\mathds{1}_{u \leq 0})u$, be the standard loss function to 
estimate the $\tau$th quantile. We define the level-$\tau$ 
loss function evaluated on $N$ at $(x,y)$ by
\begin{IEEEeqnarray}{rCl} \label{eqn:PI-loss} 
  \mathcal{L}_\tau(N(x),y) & = & h_{\tau/2}(y-l(x)) + h_{1/2}(y-m(x))\\
    & & + \> h_{1-\tau/2}(y-u(x)).\nonumber
\end{IEEEeqnarray}
Letting $|D'|$ denote the cardinality of a set $D'$, the empirical 
risk of network $N$ on $D' \subseteq D$ is
\begin{equation} \label{empirical_loss}
  \mathcal{R}_{D', \tau}(N) = \frac{1}{|D'|} \sum_{(X_i,Y_i) \in D'} 
\mathcal{L}_\tau(N(X_i),Y_i).
\end{equation}
\begin{definition} \label{def:pi-net}
Denote the neural network $N_{D', \tau}(x)=(l_{D', \tau}(x),m_{D', 
\tau}(x),u_{D', \tau}(x))$ 
to be one that is fit on $D' \subseteq D$ by minimizing the 
empirical risk $\mathcal{R}_{D', \tau}(N)$. In the trivial case where $\tau=0$, 
we set $l_{0}(x)=-\infty$ and $u_{0}(x)=\infty$ for all $x \in \mathbb{R}^d$.
\end{definition}

% In the following we give a brief motivation of the loss function
% $\mathcal{L}_\tau$. 
By the strong law of large numbers, the empirical risk
$\mathcal{R}_{D, \tau}(N)$ converges almost surely to
$\mathbb{E}[\mathcal{L}_\tau(N(X),Y)]$ as $|D| \to \infty$. Let $q_{\tau}(x)
= \inf\{y: \mathbb{P}(Y \leq y ~ | ~ X=x) \geq \tau\}$ be the conditional
$\tau$-quantile of $Y$ given $X=x$, e.g., $q_{1/2}(x)$ is the
conditional median of $Y$ given $X=x$. Following standard texts 
\citep{Koenker05}, the triple $(q_{\tau/2}(x), q_{1/2}(x), q_{1-\tau/2}(x))$
is the minimizer of $\mathbb{E}[\mathcal{L}_\tau(N(X),Y)]$. For a given
confidence level $\alpha \in (0,1)$, $(q_{\alpha/2}(x), q_{1/2}(x),
q_{1-\alpha/2}(x))$ has the desirable properties that
\begin{equation}
  q_{1/2}(\cdot) ~ = ~ \underset{f:\mathbb{R}^d \to \mathbb{R}}{\arg\min} ~ 
\mathbb{E}|Y-f(X)| \label{eqn:prop1}\\
\end{equation}
and
\begin{equation}
  1-\alpha ~ = ~
  \mathbb{P}(q_{\alpha/2}(X) \leq Y \leq q_{1-\alpha/2}(X)) ~ | ~ X)
  \label{eqn:prop2}
\end{equation}
under minimal assumptions on the joint distribution of $(X,Y)$. 
% (A sufficient condition would be the existence of a joint density with respect 
% to Lebesgue measure.) 
If the problem is constrained to linear regression or M-estimation,
then the estimators resulting from empirical risk minimization are consistent
\citep{Wooldridge01}; however, under our minimal assumptions, it is not known
whether $N_{D, \tau}(x)$ consistently estimates $(q_{\alpha/2}(x), q_{1/2}(x),
q_{1-\alpha}(x))$. As the network $N_{D,\alpha}(x)$ does not generally provide 
the desired properties in finite samples, Section \ref{sec:PI-NN} provides 
two modifications with finite sample coverage guarantees based on sample 
splitting.

\subsection{Comments on the network architecture}

In the definition of the network $N_{D, \tau}(x)$, we only specified the output 
layer of the neural network and the loss function it is minimizing. The 
remaining network architecture can be chosen by the users in order to achieve 
their predictive goals. The term
architecture comprises the network design (e.g. number and depth of hidden
layers or dropout layers) and training parameters (e.g. number of epochs, batch
size, or regularization parameters). See \citet{goodfellow2016} and citations
therein for an overview on network architecture, regularization methods, and
optimization algorithms. In fact, one can use a pre-trained neural network
in combination with our proposed output layer and loss function to fit a 
neural network with accurate predictive performance and valid 
prediction intervals. See Section \ref{sec:simulation} for an application
to a real image dataset.

We advocate the use of a single network that outputs both the prediction and 
the prediction interval to ensure that the prediction is always contained in 
the interval. Crossing quantiles is a well-known problem in the 
literature (see \citet{xuming1997}) 
that can be easily be circumvented in this way without any loss in predictive 
accuracy (see Section \ref{sec:simulation}).

In fact, even our proposed loss function can be modified to the users needs.
For example, the midpoint estimate
is not restricted to be the median; one can also use the mean squared error 
instead of the absolute error in the loss function in order to estimate the 
conditional mean. If the underlying distribution is highly skewed, however, the 
conditional mean is not guaranteed to be lie between $q_{\alpha/2}(x)$ and 
$q_{1-\alpha/2}(x)$. We note that the mean squared error of the point 
estimate can be written as $h_{1/2}(y-m(x))^2$, so that squaring the other terms 
of equation (\ref{eqn:PI-loss}) puts them on the same scale. 

The supplemental materials demonstrate the validity of our methods over various 
architectures and loss functions. As this is guaranteed by our theorems, these 
examples are not included in the paper.

%Currently, we advocate keeping the terms of equation (\ref{eqn:PI-loss}) on the 
%same relative scale and given equal weight, though domain-specific departures 
%from this heuristic is a subject of further research. 

\section{Construction of valid PIs with neural networks} \label{sec:PI-NN}

Let $D_1$ and $D_2$ be an arbitrary partition
of $D$ into two disjoint sets. $D_1$ is used to select and train the network
and $D_2$ is used to adjust the resulting intervals to provide coverage
guarantees. We emphasize that all results are still valid for any
data-dependent architecture, as long as the dependency is only on $D_1$. As all 
networks in this paper are fit using $D_1$, we simplify our notation,
setting $N_{\tau}(x) = N_{D_1, \tau}(x)$ and analogously for $l_{\tau}(x)$,
$m_{\tau}(x)$ and $u_{\tau}(x)$.

\subsection{Average Coverage} 
\label{sec:PI_conformal}

To achieve average coverage, we use methods derived from conformal inference
\citep{Vovk05, Lei+18}. In general, conformal prediction intervals require a
fixed prediction procedure that is refit on an augmented data set. As it would
clearly be infeasible to refit a large network many times, this process can be
simplified by using classical sample slitting. \citet{Lei+18} refer to such
methods as split-conformal. 
% The interval $[l_{\tau}(x),u_{\tau}(x)]$ given 
% by $N_{\tau}(x)$ will approximately capture the properties of the underlying
% conditional distribution of $Y$ given $X=x$, but the intervals may not provide
% average coverage. In order to achieve provably average coverage, we can use
% $D_2$ to adjust the interval using an expansion constant $c \in (0, \infty)$.

For a given $x\in \mathbb{R}^d$ and $y \in \mathbb{R}$, the interval
$[l_{\tau}(x),u_{\tau}(x)]$ may not contain $y$; however, with $c$ given by
\begin{equation*}
c = \max\left( \frac{m_\tau(x) - y}{m_\tau(x) - l_\tau(x)},
  \frac{y - m_\tau(x)}{u_\tau(x) - m_\tau(x)} \right),
\end{equation*}
$y$ is an endpoint of the interval $\Gamma_\tau^c(x) = [m_\tau(x) - 
c(m_\tau(x)-l_\tau(x)), m_\tau(x)
+ c(u_\tau(x)-m_\tau(x))]$, and hence is contained in $\Gamma_\tau^c(x)$. These 
facts can be used in a conformal inference procedure to calibrate the 
PI. In essence, a constant $\hat{c} \in (0, \infty)$ is chosen such that at least 
(1-$\alpha$)100\% of the observations \emph{in the hold-out data}, $D_2$, are 
contained in the interval $\Gamma_\tau^{\hat{c}}(X)$. Details are given in 
Algorithm \ref{alg:splitConformal}. 
% Note that if there are ties among the $c_i$s 
% defined in Algorithm \ref{alg:splitConformal}, we use a tie breaking rule so 
% that $\hat{c}$ is always well-defined.

\begin{algorithm}[t]
\caption{Split Conformal Prediction Intervals}
\label{alg:splitConformal}
\begin{algorithmic}
  \STATE {\bfseries Input:} Holdout data $D_2$, network $N_{\tau}(x)$.
  \STATE {\bfseries Output:} Scaled network $N_{\tau}^{\hat{c}}(x)$ with 
  parameter $\hat{c}$.
  \STATE {\bfseries Set:}
      $c_i = \max\left( \frac{m_{\tau}(X_i) - Y_i}{m_{\tau}(X_i) - 
l_{\tau}(X_i)}, \frac{Y_i - m_{\tau}(X_i)}{u_{\tau}(X_i) - m_{\tau}(X_i)} 
\right)$ for \\\pushcode all $(X_i,Y_i) \in D_2$.
 \STATE {\bfseries Set:} $\hat c$ = $c_{(k)}$, $k = 
\left\lceil(1-\alpha)(|D_2|+1)\right\rceil$ and $c_{(k)}$ \\
  \pushcode the $k$th order statistic.
 \STATE {\bfseries Set:} $l_{\tau}^{\hat{c}}(x) = m_{\tau}(x) - 
\hat{c}(m_{\tau}(x) - l_{\tau}(x))$ and \\
  \pushcode $u_{\tau}^{\hat{c}}(x) = m_{\tau}(x) + \hat{c}(u_{\tau}(x) - 
m_{\tau}(x))$.
 \STATE {\bfseries Return:} $N_{\tau}^{\hat{c}}(x) = (l_{\tau}^{\hat{c}}(x), 
m_{\tau}(x) ,u_{\tau}^{\hat{c}}(x))$.
\end{algorithmic}
\end{algorithm}

\begin{theorem}
\label{thm:conformal}
Let $N_{\tau}(x) = N_{D_1,\tau}(x)$ be as in Definition \ref{def:pi-net} and
$N_{\tau}^{\hat c}(x)$ be the result of Algorithm \ref{alg:splitConformal}.
Set $\Gamma_{\alpha}^{\hat{c}}(x) =
[l_{\tau}^{\hat{c}}(x),u_{\tau}^{\hat{c}}(x)]$. Then
$\mathbb{P}\left(Y \in \Gamma_{\alpha}^{\hat{c}}(X) \right) \geq 1-\alpha.$
\end{theorem}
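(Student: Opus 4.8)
The plan is to recognize Algorithm~\ref{alg:splitConformal} as an instance of split-conformal prediction driven by the conformity score
\[
  c(x,y) = \max\left( \frac{m_{\tau}(x) - y}{m_{\tau}(x) - l_{\tau}(x)},
  \frac{y - m_{\tau}(x)}{u_{\tau}(x) - m_{\tau}(x)} \right),
\]
so that $c_i = c(X_i,Y_i)$ for $(X_i,Y_i)\in D_2$ and $\hat{c}=c_{(k)}$ with $k=\lceil(1-\alpha)(|D_2|+1)\rceil$, and then to invoke the standard exchangeability argument.

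First I would condition on $D_1$. Since $N_{\tau}=N_{D_1,\tau}$ is fit using $D_1$ alone, the functions $l_{\tau},m_{\tau},u_{\tau}$, and hence the score $c(\cdot,\cdot)$, are deterministic given $D_1$. Because $D$ consists of i.i.d.\ copies of $(X,Y)$ and the new point $(X,Y)$ is independent of $D$, conditional on $D_1$ the holdout scores $\{c(X_i,Y_i):(X_i,Y_i)\in D_2\}$ together with $c(X,Y)$ form $|D_2|+1$ i.i.d., hence exchangeable, random variables.

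The key algebraic step is to translate interval membership into a bound on the score. Writing $\Gamma_{\alpha}^{\hat{c}}(X)=[m_{\tau}(X)-\hat{c}(m_{\tau}(X)-l_{\tau}(X)),\,m_{\tau}(X)+\hat{c}(u_{\tau}(X)-m_{\tau}(X))]$, the lower-endpoint inequality rearranges to $(m_{\tau}(X)-Y)/(m_{\tau}(X)-l_{\tau}(X))\le\hat{c}$ and the upper-endpoint inequality to $(Y-m_{\tau}(X))/(u_{\tau}(X)-m_{\tau}(X))\le\hat{c}$; both hold simultaneously precisely when $c(X,Y)\le\hat{c}=c_{(k)}$. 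Consequently the conditional coverage equals $\mathbb{P}(c(X,Y)\le c_{(k)}\mid D_1)$. By exchangeability, the rank of $c(X,Y)$ among the $|D_2|+1$ scores is uniform on $\{1,\dots,|D_2|+1\}$ (using the stated tie-breaking rule where necessary), so this probability is at least $k/(|D_2|+1)\ge 1-\alpha$, the last inequality being exactly the defining property of the ceiling. Since the bound holds for almost every $D_1$, integrating over $D_1$ yields $\mathbb{P}(Y\in\Gamma_{\alpha}^{\hat{c}}(X))\ge 1-\alpha$.

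The main obstacle is bookkeeping rather than any deep idea. I must check the degenerate cases in which a denominator $m_{\tau}(X)-l_{\tau}(X)$ or $u_{\tau}(X)-m_{\tau}(X)$ vanishes, so that $c$ is still well-defined under the usual conventions (e.g.\ $a/0=+\infty$ for $a>0$) and the equivalence above survives; and I must confirm that ties among the $c_i$ can only increase coverage, so that the tie-breaking rule makes $\hat{c}$ well-defined without undermining the uniform-rank argument. Everything else is the routine split-conformal calculation.
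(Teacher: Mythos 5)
Your proposal is correct and follows essentially the same argument as the paper's proof: condition on $D_1$, observe that the holdout scores together with the score of the new point are i.i.d.\ (hence exchangeable) given $D_1$, apply the uniform-rank argument to get $\mathbb{P}\left(c(X,Y) \leq c_{(k)} \mid D_1\right) \geq k/(|D_2|+1) \geq 1-\alpha$, and integrate over $D_1$. Your added care about the score-to-interval equivalence and the degenerate-denominator and tie cases is sound bookkeeping that the paper largely leaves implicit, but it does not change the route.
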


Note that Theorem \ref{thm:conformal} also holds for any data-dependent 
$\hat{\tau}$ as long as the dependence of $\hat{\tau}$ on the data is only 
through $D_1$. As $\tau$ controls the width of the estimated interval and that 
is precisely 
what $\hat c$ is selected to calibrate, we suggest setting $\tau = \alpha$ in 
practice. In the simulations of Section \ref{sec:simulation}, we 
typically observe $\hat{c}\geq1$ for $N_{\alpha}(x)$, but $\hat{c} 
\to 1$ as $|D_1|$ increases, suggesting that $N_{\alpha}(x)$ has 
asymptotic $1-\alpha$ average coverage.

\begin{proof}[Proof of Theorem \ref{thm:conformal}]
Because the network $N_{\tau}(x)$ is fit on $D_1$ which is independent of 
$D_2$, the statistics $c_i$ are i.i.d. conditional on $D_1$. For the new 
observation $(X,Y)$, set
\begin{equation*}
 c' = \max\left( \frac{m_{\tau}(X) - Y}{m_{\tau}(X) - l_{\tau}(X)}, \frac{Y -
m_{\tau}(X)}{u_{\tau}(X) - m_{\tau}(X)} \right).
\end{equation*}

Conditional on $D_1$, $c'$ is independent of $c_i$, has the same distribution
as $c_i$, and the rank of $c'$ among the $c_i$s is uniform over the set of
integers $\{1,2,\ldots,|D_2|+1\}$. Therefore, $\mathbb{P}(c' > c_{(k)} ~ | ~
D_1) \leq \alpha$, where $k = \left\lceil(1-\alpha)(|D_2|+1)\right\rceil$ and
$c_{(k)}$ is the $k$th order statistic of the $c_1,\dots,c_n$. But this implies 
$\mathbb{P}(Y \in \Gamma_{\alpha}^{\hat{c}}(X) ~ | ~ D_1) \geq 1-\alpha$.
\end{proof}

If ties among $c_i$s only happen on a set of measure 0, or if we use a random
tie-breaking rule, the probability of coverage can be upper bounded by
$1-\alpha+1/(|D_1|+1)$, meaning that the intervals
are not unnecessarily wide (cf. Theorem 2.2 of \citet{Lei+18}).
% A sufficient condition would be an
% existing density of $c_i$ with respect to the Lebesgue measure. However, it is
% not clear if this follows by assuming the existence of a density of $(X,Y)$.
Algorithm \ref{alg:splitConformal} is a modified version of the
Split-Conformal algorithm of \citet{Lei+18}. Our statistics $c_i$ are tailored 
to our neural network which outputs an ordered triple, whereas previous 
algorithms use the residuals $R_i = |Y_i-m(X_i)|$ or standardized variants.

\subsection{PAV} \label{sec:PI_selection} 

In order to provide PAV intervals, we select $\hat\tau$ using $D_2$ so that
$l_{\hat\tau}(x)$ and $u_{\hat\tau}(x)$ adaptively estimate the quantiles
$q_{\alpha/2}(x)$ and $q_{1-\alpha/2}$. This adaptation is required as we are
not guaranteed that $l_\alpha(x)$ and $u_\alpha(x)$ accurately estimate these 
quantities.
Let $\hat{p}(N_{\tau}, D_2)$ be the empirical coverage probability of the
neural network $N_{\tau}$ on the data set $D_2$, i.e.,
\begin{equation*}
   \hat{p}(N_{\tau}, D_2) = \frac{1}{|D_2|} \sum_{(X_i,Y_i) \in D_2} 
\mathds{1}_{Y_i \in [l_\tau(X_i),u_\tau(X_i)]}.
\end{equation*} 
We choose a parameter $\hat\tau$ over a grid $G_K$, where
\begin{equation*}
  G_K = \{\tau_{(1)}, \dots, \tau_{(K)})\}, \quad 1 > \tau_{(1)} > \dots \tau_{(K)} > 0,
\end{equation*}
such that the network $N_{\hat\tau}(x)$ has $1-\alpha$
coverage on $D_2$. Details are given in Algorithm $\ref{alg:pav}$.
%provides PAV prediction intervals, as formalized in the
%following theorem. The PAV-interval provides a type of conditional coverage 
%that is particularly difficult to control in practice \citep{Barb+19}.
%We are not aware of any other result with similar coverage guarantees. While 
%the procedure itself is a reasonable heuristic, our result elevates it from a 
%heuristic to an easily justifiable and valid procedure.

\begin{algorithm}%[t]
\caption{PAV Prediction Intervals}
\label{alg:pav}
\begin{algorithmic}
  \STATE {\bfseries Input:} Holdout data $D_2$, a grid $G_K$, Networks $N_{\tau}(x)$
  with $\tau \in G_K$.
  \STATE {\bfseries Output:} Network $N_{\hat \tau}(x)$.
  \STATE {\bfseries Set:} $\hat \tau = 0$.
  \STATE {\bfseries For} $i=1$ to $K$: \\
  \quad \bfseries{If} $\hat{p}(N_{\tau{(i)}}, D_2) \geq 1-\alpha$: \\
  \quad\quad \bfseries {Set:} $\hat \tau = \tau_{(i)}$. \\
  \quad\quad {\bfseries End.}
  \STATE {\bfseries Return:} $N_{\hat \tau}(x)$.
\end{algorithmic}
\end{algorithm}

%\noteKH{We need to decide when to use N(x) vs just N. For example, I would just use N here I think. Or say N=N(x).}
%\noteKH{Notation for grid bugs me a bit...$G_k$ is not a subset of the 
% interval. it could be $G_K \in [0,1]^{K+1}$ if you think of it as a set or a 
% vector. hmmm...maybe it's fine, I just read that as an interval or dense 
% subset.}
\begin{theorem} \label{thm:pav} 
  Let $N_{\tau}(x) = N_{D_1,\tau}(x)$ be as in Definition \ref{def:pi-net} and
  $N_{\hat \tau}(x)$ be the result of Algorithm \ref{alg:pav}. Set
  $\Gamma_{\alpha}^{\hat{\tau}}(x) = [l_{\hat{\tau}}(x),u_{\hat{\tau}}(x)]$. Set
  $p(\Gamma_{\alpha}^{\hat{\tau}}|D) = \mathbb{P}(Y \in
  \Gamma_{\alpha}^{\hat{\tau}}(X) ~ | ~ D)$ and $n_2 = |D_2|$. Then,
  \begin{equation*}
    \mathbb{P}\left(p(\Gamma_{\alpha}^{\hat{\tau}}|D) \leq 1-\alpha-\epsilon
    \right) \leq K\exp(-2\epsilon^2 n_2).
  \end{equation*}
  
%   ($\hat{\tau}$ is always well defined by definition of the network
%   $N_{0}(x)$.)
\end{theorem}
The theorem continues to hold for any data-dependent grid, $\hat G_K$, as long
as the dependence of $\hat G_K$ on the data is only
through $D_1$. Given the tendency of flexible models such as neural networks to
over-fit the data, $[l_\alpha(x), u_\alpha(x)]$ typically under-covers in
finite samples. As such, we suggest setting $G_K \subseteq [0,\alpha]$, and in
the simulations of Section \ref{sec:simulation} we typically observe
$\hat{\tau} \leq \alpha$ but that $\hat{\tau} \to \alpha$ from below as $|D_1|$
increases. Note that by solving the equation $\delta = K\exp(\minus2\epsilon^2 n_2)$, we
get that for all $n_2 \geq
n(\epsilon,\delta,K)=\minus\log(\delta/K)/(2\epsilon^2)$,
\begin{equation*}
   \mathbb{P}\left(p(\Gamma_{\alpha}^{\hat{\tau}}|D) \geq 1-\alpha-\epsilon
   \right) \geq 1-\delta.
\end{equation*}

\begin{proof}[Proof of Theorem \ref{thm:pav}]
 By definition of $N_{\hat{\tau}}(x)$, we have 
  \begin{IEEEeqnarray*}{rCl}
    \multicol{3}{l}{\mathbb{P}\left(p(\Gamma_{\alpha}^{\hat{\tau}}|D) 
     \leq 1-\alpha - \epsilon \right)}\\
    \quad & \leq & \mathbb{P}\left(p(\Gamma_{\alpha}^{\hat{\tau}}|D) \leq  
\hat{p}(N_{\hat{\tau}}, D_2) - \epsilon \right) \\
    & = & \mathbb{E} \left[ \mathbb{P}\left(p(\Gamma_{\alpha}^{\hat{\tau}}|D) 
\leq  \hat{p}(N_{\hat{\tau}}, D_2) - \epsilon  ~ | ~ D_1 \right) \right].
  \end{IEEEeqnarray*}
  We use the union bound to bound the conditional probability within the 
  expectation from above by
  \begin{equation*}
    \Scale[0.9]{
    \sum\limits_{\tau \in G_K\cup \{0\}} \mathbb{P}\left(p([l_{\tau}(X),u_{\tau}(X)] ~| ~ 
    D) \leq  \hat{p}(N_{\tau}, D_2) - \epsilon  ~ | ~ D_1 \right).
    }
  \end{equation*}
  For $\tau=0$, we have $[l_{\tau}(X),u_{\tau}(X)]=[-\infty,\infty]$ and the
  corresponding
  summand in the previous expression is equal to $0$. For $\tau \neq 0$, we
  have $p([l_{\tau}(X),u_{\tau}(X)] ~| ~ D) = p([l_{\tau}(X),u_{\tau}(X)] ~| ~
  D_1)$, because the event in the first conditional probability is independent 
  of $D_2$. Observe that, conditional on $D_1$, $\hat{p}(N_{\tau}, D_2)$ is 
  the mean of $n_2$ Bernoulli-trails with mean
   $p([l_{\tau}(X),u_{\tau}(X)] ~| ~ D_1)$. By Hoeffding's inequality, the 
   corresponding summand in the previous expression is bounded by
  $\exp(\minus2 \epsilon^2 n_2)$.
\end{proof}

Even though the PAV interval $\Gamma_{\alpha}^{\hat{\tau}}(X)$ does not
generally provide average coverage, this can be achieved by defining a more 
conservative PAV interval.

\begin{corollary} \label{cor:pav}
  Fix $\epsilon>0$ such that $\alpha-\epsilon>0$. Let $\Gamma_{\alpha-\epsilon}^{\hat{\tau}}(x)$ be defined as in Theorem \ref{thm:pav} with $\alpha-\epsilon$ replacing $\alpha$. For all
  \begin{equation*}
    n_2 \geq \frac{-2\log(\epsilon/(2K(1-\alpha+\epsilon/2)))}{\epsilon^2}
  \end{equation*}
  we have
  \begin{equation*}
    \mathbb{P}\left(Y \in \Gamma_{\alpha-\epsilon}^{\hat{\tau}}(X) \right) \geq
1-\alpha.
  \end{equation*}
\end{corollary}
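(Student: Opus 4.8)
The plan is to relate the unconditional (average) coverage to the conditional coverage $p(\Gamma_{\alpha-\epsilon}^{\hat\tau}|D)$ through the tower property, and then to convert the high-probability (PAV) guarantee of Theorem~\ref{thm:pav} into a bound on an expectation. Writing $P := p(\Gamma_{\alpha-\epsilon}^{\hat\tau}|D) = \mathbb{P}(Y \in \Gamma_{\alpha-\epsilon}^{\hat\tau}(X)\mid D)$, the law of total expectation gives $\mathbb{P}(Y \in \Gamma_{\alpha-\epsilon}^{\hat\tau}(X)) = \mathbb{E}[P]$, so it suffices to show $\mathbb{E}[P] \geq 1-\alpha$.

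Next I would apply Theorem~\ref{thm:pav} to the interval $\Gamma_{\alpha-\epsilon}^{\hat\tau}$, which by hypothesis is constructed exactly as in that theorem but with confidence level $\alpha-\epsilon$ in place of $\alpha$. To avoid a clash of notation, the role played by the symbol ``$\epsilon$'' in the conclusion of Theorem~\ref{thm:pav} will here be taken by $\epsilon/2$. This yields the tail bound $\mathbb{P}\!\left(P \leq 1-(\alpha-\epsilon)-\epsilon/2\right) \leq K\exp(-2(\epsilon/2)^2 n_2)$, that is, $\mathbb{P}(P \leq 1-\alpha+\epsilon/2) \leq K\exp(-\epsilon^2 n_2/2)$.

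I would then turn this tail bound into a lower bound on $\mathbb{E}[P]$. Since $P \geq 0$, for the threshold $t = 1-\alpha+\epsilon/2$ we have $\mathbb{E}[P] \geq t\,\mathbb{P}(P \geq t)$, and combining this with the tail bound gives $\mathbb{E}[P] \geq (1-\alpha+\epsilon/2)\bigl(1 - K\exp(-\epsilon^2 n_2/2)\bigr)$. It then remains to check that the hypothesis on $n_2$ forces the right-hand side to be at least $1-\alpha$. Rearranging the inequality $(1-\alpha+\epsilon/2)(1-\eta) \geq 1-\alpha$ with $\eta = K\exp(-\epsilon^2 n_2/2)$ reduces to $\eta \leq \epsilon/(2(1-\alpha+\epsilon/2))$; taking logarithms converts this into precisely the stated sample-size condition $n_2 \geq -2\log\bigl(\epsilon/(2K(1-\alpha+\epsilon/2))\bigr)/\epsilon^2$.

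The routine steps (the tower property, the $\mathbb{E}[P]\geq t\,\mathbb{P}(P\geq t)$ estimate, and the algebraic rearrangement) are straightforward. The only genuine design choice, and the step I expect to require the most care, is the selection of the slack $\epsilon/2$ in the application of Theorem~\ref{thm:pav}: it must be small enough that the threshold $t=1-\alpha+\epsilon/2$ stays strictly above $1-\alpha$, leaving a positive margin to absorb the tail probability, yet large enough that $K\exp(-\epsilon^2 n_2/2)$ remains controllable for moderate $n_2$. Splitting the available margin of size $\epsilon$ evenly as $\epsilon/2$ is what makes the resulting sample-size requirement coincide with the bound stated in the corollary.
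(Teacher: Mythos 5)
Your proposal is correct and follows exactly the route the paper intends: the intermediate bound you derive, $\mathbb{E}[P] \geq (1-\alpha+\epsilon/2)\bigl(1-K\exp(-\epsilon^2 n_2/2)\bigr)$, is precisely the inequality the paper states as its one-line proof of Corollary~\ref{cor:pav}, obtained the same way (Theorem~\ref{thm:pav} at level $\alpha-\epsilon$ with slack $\epsilon/2$, then the tower property and the truncation estimate). Your write-up simply fills in the algebra that the paper leaves implicit.
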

To prove the result observe that $\mathbb{P}(Y\in 
\Gamma_{\alpha-\epsilon}^{\hat\tau}(X)) \geq (1-\alpha + 
\epsilon/2)(1-K\exp(\minus\epsilon^2 n_2/2)).$

\subsection{Discussion of contributions}
\label{sec:contribution}

% identifying the appropriate parameter (or conformity measure) to tune is 
% non-trivial \citep{ShafV08}. Our split-conformal approach uses a novel 
% conformity score tailored to neural networks. Existing conformity scores, 
% such as of \textit{conf-fw}, are not specific to neural networks and do not 
% exploit its full potential. Furthermore, p
% Using the residuals $R_i$ instead of $c_i$ in Algorithm 
% \ref{alg:splitConformal} 
% produces valid intervals, but they do not adapt to heteroskedasticity or 
% asymmetry. This classical, sample-splitting approach was identified for neural 
% networks as early as \citet{Pap11}.

The split-conformal and adaptive selection approaches of the previous 
subsections admittedly constitute a revival of sample splitting. Practitioners 
have long used sample-splitting as a valid, tractable method for tuning or 
testing and can easily implement it. The methodology followed in this paper 
differs subtly but significantly from the standard use of sample splitting or 
conformal inference. Typically one creates either residuals or standardized 
residuals in order to estimate an additive adjustment factor for creating a PI 
of the form estimate $\pm$ error. 
% This can be considered as a non-parametric way 
% to construct an interval by answering the question: how many standard deviations 
% around the estimate must be included to provide coverage? 
It is clear that such intervals may not be appropriate for skewed or multimodal 
distributions. Instead of designing more complex standardizations for residuals, 
this paper leverages the power of NN in order to estimate the quantiles 
directly, using subsequent adjustments for finite sample guarantees. 

We conjecture that the desirable properties in equations 
(\ref{eqn:prop1}) and (\ref{eqn:prop2}) are satisfied in a well-defined 
asymptotic setting: First, neural networks are universal approximators (cf. 
\citet{cybenko1989,hornik1991}). Second, given a sufficiently large data set, 
and under the assumption that we can minimize the empirical risk, a neural 
network is able to learn the conditional mean of $Y$ given $X$ (cf. 
\citet{bauer2017}). This conjecture is supported by simulation evidence 
presented in the supplement where we observe that $N_{D, \tau}(x) \to 
(q_{\alpha/2}(x), q_{1/2}(x), q_{1-\alpha/2}(x))$ as $|D| \to \infty$.

We propose PIs with average coverage control and an approximate conditional 
coverage control (PAV). A stronger coverage claim would be to control the 
conditional coverage probability, conditional on the input $X$. However, 
this only achievable under much more restrictive assumptions. Also, we note that
our only (and crucial) assumption is i.i.d.-ness of the data. This means, our PIs 
are in general not valid for out-of-distribution samples.
% It trades 
% the fixed structure of an interval from estimate $\pm$ error to one identified 
% by the NN. This allows adaptivity to a much larger class of distributions.
%\noteKH{explain structure better than est $\pm$ error}

\section{Simulations} \label{sec:simulation}

Throughout this section, we refer to
$\Gamma_{\alpha}^{\hat{c}}(X)$ from Section \ref{sec:PI_conformal} as
\textit{conf-nn} and $\Gamma_{\alpha}^{\hat{\tau}}(X)$ from Section
\ref{sec:PI_selection} as \textit{pav}. 
We compare our methods to 5 different procedures that use 
different types of loss functions that are designed to fit neural 
networks that output PIs. The first (\textit{conf-fw}) is the 
classical, ``fixed-width'' conformal method that uses the absolute 
residuals as conformity score to create valid PIs \citep{Pap11}. The second 
method (\textit{high-q}) is the so-called ``high-quality''
driven method of \citet{Pearce+18}. This method uses a loss 
function that penalizes in-sample mis-coverage and interval length. The third 
method (\textit{neg-ll}) uses the
Gaussian negative-log-likelihood to estimate the mean and variance
of the target \citep{nix1994}. 
The fourth method is the calibrated regression model of \citet{kuleshov2018},
where the Gaussian negative-log-likelihood is used to estimate the 
distribution and and isotonic regression for calibration. Because of
its probabilistic approach, we refer to this method as \textit{bayes}.
Finally, we also consider the interval $\Gamma_{\alpha}(X)$ (\textit{qreg-un}), 
i.e., the unadjusted interval function of the network $N_\alpha(X)$. 
% For more detailed descriptions of the methods, we refer to the corresponding 
% papers.

% We refer to the ``fixed-width''
% conformal method as \textit{conf-na} (``na'' for ``naive''), the
% ``high-quality'' as \textit{high-q} and the ``negative-log-likelihood''
% method as \textit{neg-ll}. 
% For the a description of the three competitive
% methods we refer to papers mentioned above. 
% The loss function of all these
% methods can be optimized via stochastic gradient descent. 
In all data examples, we split the data $D$ into a training set $D_1$, a 
validation set
$D_2$, and a test set $D_3$. $D_1$ is used to train the network, $D_2$ is used 
to calibrate the intervals for \textit{pav}, \textit{conf-nn}, \textit{conf-fw}
and \textit{bayes}. Because $\textit{neg-ll}$ and $\textit{high-q}$ were
very sensitive to the choice of hyperparameters in our experiments, we used $D_1$ 
to train these networks and $D_2$ for hyperparameter selection. 
All data experiments set $\alpha=.1$ and 
calculate results using $D_3$, which was unseen by the models. The $\hat{\tau}$ 
in $\Gamma_{\alpha}^{\hat{\tau}}(X)$ is selected on the grid 
$G_{10}=\{.1,.09,\ldots,.01\}$ and we set $\tau=\alpha$ when constructing 
$\Gamma_{\alpha}^{\hat{c}}(X)$. In all experiments, we used the Adam optimizer 
with learning rate 0.01 and no learning rate decay. The number of epochs was 
chosen by cross-validation on $D_1$. 
% See Sections \ref{sec:PI_conformal} and \ref{sec:PI_selection} for precise 
% definitions of these procedures. 
In each data example, the same network architecture is used for all methods and 
is trained in Python using the PyTorch library \citep{paszke2017}. These 
architectures are described along with the data examples. All 
computations were performed on a Google-Cloud-Platform instance with a NVIDIA 
Tesla K80 GPU. 

%For each 
%method, we trained a neural network with one hidden layer, with 100 hidden nodes, 
%and between 80 and 200 passes through the data.\noteKH{accurate?}

% The network by \citet{Pap11} has
% a one-dimensional output $\hat{\mu}(x)$ that estimates the median function via
% the absolute loss. This method uses standard split-conformal approach with the
% residuals $R_i = |Y_i - m(X_i)|$ as non-conformity score to produce PIs
% $\hat{\mu}(x) \pm R_{(k)}$, where $k =
% \left\lceil(1-\alpha)(|D_2|+1)\right\rceil$ and $R_(k)$ is the $k$th largest
% residual. We refer to this approach as \textit{naive-conformal}. 
%The method by \citet{LakPBak17} fits a network with a two-dimensional output 
%that estimates the mean function $\hat{\mu}(x)$ and variance function 
%$\hat{\sigma}^2(x)$ by minimizing the negative log-likelihood of the normal 
%distribution. 
% We use $\hat{\mu}(x) \pm \Phi^{-1}(1-\alpha/2) \hat{\sigma}(x)$ as 
% the PI, where $\Phi^{-1}(1-\alpha/2)$ is the $1-\alpha$ quantile of the 
% standard Gaussian distribution. We refer to this method as \textit{Laksh17}. 
%The method by \citet{Pearce+18} fits a network with a two-dimensional output 
%that estimates the lower and upper bound of the PI via a loss function that 
%penalizes in-sample mis-coverage and interval length. We use the soft loss 
%function as described in Algorithm 1 of their paper. The midpoint of the 
%interval is used as the prediction.

\subsection{Artificial data}

We simulate covariates $X\in[0,1]^{100}$, where each entry is drawn i.i.d. from 
a standard uniform distribution.
% In the following we describe the data generating process. The covariate $X$ is
% a $100$-dimensional random vector with entries i.i.d. from a uniform
% distribution on the interval $(0,1)$. 
The response is given by
\begin{equation*}
  Y = f(\beta'X) + \epsilon, \quad \epsilon \sim N(0,1 + (\beta'X)^2),
\end{equation*}
where $f(x) = 2\sin(\pi x) + \pi x$ and only the first 5 components of
$\beta\in\{0,1\}^{100}$ are non-zero and equal to 1. This is a challenging 
setting because the model is sparse, non-linear in $X$, and heteroskedastic in 
$Y$ given $X$. We compare all the methods to an oracle that knows this 
data-generating process and thus the true conditional median, $q_{1/2}(X) = 
f(\beta'X)$, as well as the uniformly most accurate, 
unbiased PI, $f(\beta'X) \pm z_{\alpha/2}\sqrt{1 + (\beta'X)^2}$,
where $z_{\alpha/2}$ is the $(1-\alpha/2)$-quantile of the standard
normal distribution. We generate $|D_1 \cup D_2| = 100,000$ and used $3/4$ of 
the data for training and $1/4$ for validation. In the supplement, we provide 
results for data set sizes ranging between 5,000 and 100,000 with comparable 
results. We repeated the experiment 10 times. For each method, we 
train a neural network with one hidden layer and 200 hidden nodes using 80-100 
passes through the data. An independent data set of size 20,000 is used for 
testing.

\begin{figure*}[!th]
\centerline{\includegraphics[trim=0 60 0 50,
  clip, width=\textwidth]{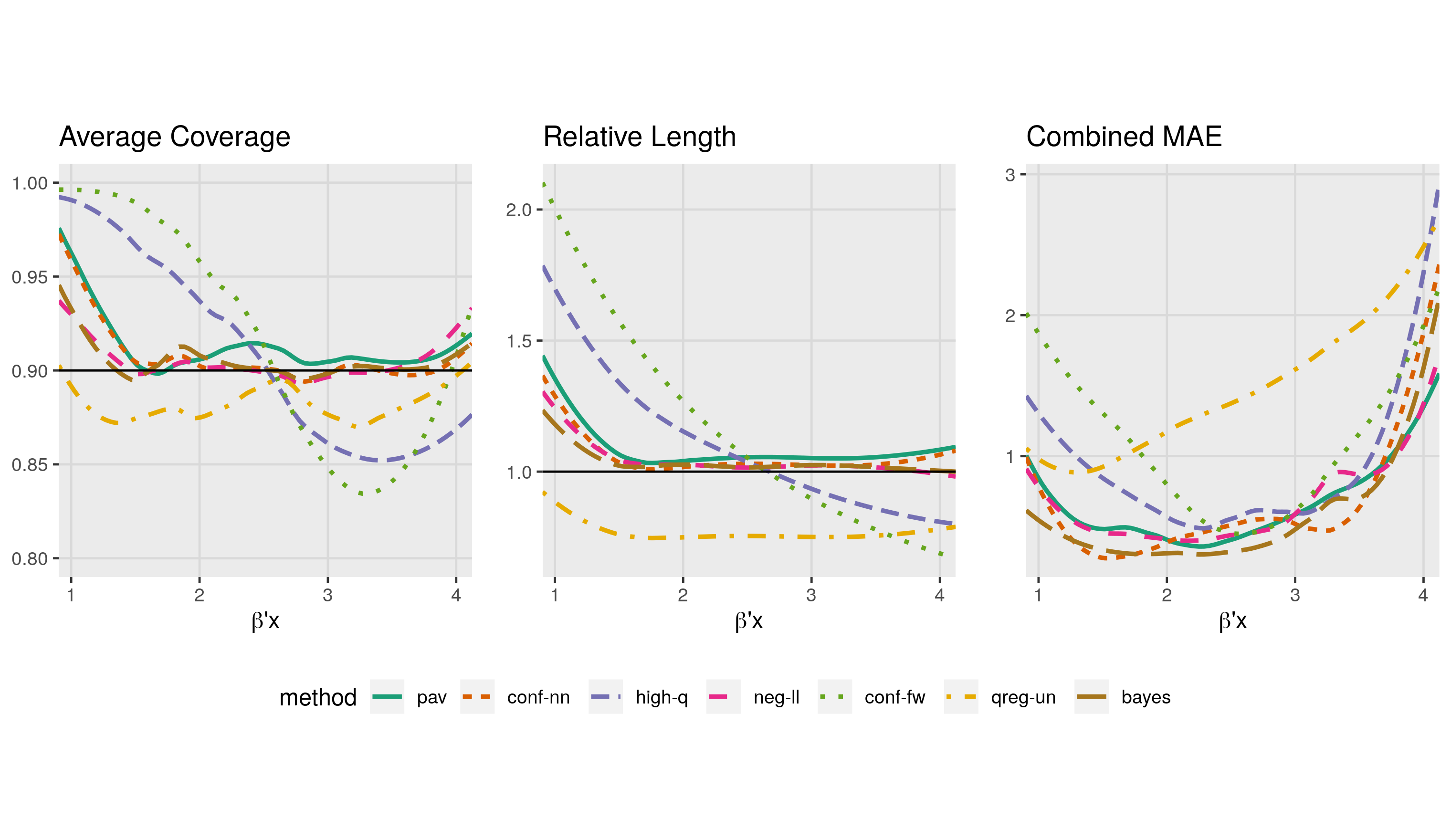}}
% \caption{
% \textbf{1}:Par $D_3$.6666666
% \textbf{2}: Empirical coverage on $D_3$ as a function of $\beta'x$ for a 
% particular value of $n$.
% \textbf{3}: Sum of the MADs for estimating 
% $q_{1/2}(X)$, $q_{\alpha/2}(X)$, and $q_{1-\alpha/2}(X)$.}
 \caption{Results on Simulated Data. Each measure is plotted as a 
function of the true linear component, $\beta'X$. The combined MAE is the 
mean absolute error for estimating the oracle: ($q_{1/2}(X)$, 
$q_{\alpha/2}(X)$, $q_{1-\alpha/2}(X)$).}
\end{figure*} 
 \label{fig:simulation}

Simulation results are summarized in Figure \ref{fig:simulation}, which shows 
the empirical coverage, the interval length relative to the oracle, and the
mean absolute error (MAE) from estimating ($q_{1/2}(X)$, $q_{\alpha/2}(X)$, 
$q_{1-\alpha/2}(X)$). Each measure is plotted as a function of $\beta'X$ 
in order to observe how various methods adapt to heteroskedasticity. Both 
\textit{conf-nn} and \textit{pav} provide coverage close to the nominal level 
throughout the majority of the domain of $\beta'X$. Only for small $\beta'X$, the 
two methods are too wide. This can be explained by the fact that in this area 
only a few observations are available (see the data generation process) to 
estimate the conditional quantiles. This demonstrates that both 
procedures are accurately estimating the true underlying distribution on a wide
range of the input space. \textit{bayes} and \textit{neg-ll} perform equally 
well, which is not surprising 
because both procedures minimize the Gaussian negative-log-likelihood. In terms 
of length and estimation error, \textit{conf-nn}, \textit{pav}, \textit{bayes} 
and \textit{neg-ll} all provide quality estimates for a wide range of 
$\beta'X$. 

\textit{conf-fw} and \textit{high-q} on the other hand over-cover in the left 
tail and undercover in the right tail as they do not capture the 
heteroscedasticity of the true distribution. This is observed in their lengths 
relative to the oracle. We see that \textit{qreg-un} undercovers through the 
entire input space, which confirms the need for calibration.

\subsection{Real Data}
\label{sec:data}

In each of the real data examples, the ratio of observations in
$D_1$, $D_2$ and $D_3$ was equal to 3:1:1. Each experiment was repeated 20
times, i.e., the data set is randomly partitioned into three parts, then the 
model is trained on $D_1$, tuned on $D_2$, and evaluated on $D_3$. This 
process constitutes one repetition. We discuss the results from four real 
datasets here, about which further information is given in the supplement. When 
applicable, covariates were standardized to have mean 0 and variance 1.

We consider two datasets from the data science platform Kaggle and two from the 
UCI machine learning repository \citep{UCI}. The first is a bike share dataset 
from UCI that contains the hourly count of bike rentals in the Capital bikeshare 
system along with weather information between 2011 and 2012 \citep{BikeShare}. 
The task is to predict the number of bike rentals based on weather information. 
The other UCI dataset provides hourly traffic volume, westbound on 
I-94, in Minneapolis-St Paul, Minnesota between 2012 and 2018. The dataset also 
includes weather and holiday information. For both datasets, we trained a 
network with one hidden layer and 100 hidden nodes. 

From Kaggle, we consider an image data set and a standard regression 
task.
% \footnote{\href{https://www.kaggle.com/kmader/rsna-bone-age}{ 
% www.kaggle.com/kmader/rsna-bone-age}} 
The image dataset contains 12,611 observations, each consisting of an x-ray of a
patient's hand. The task is to predict the patient's age using 
the x-ray. We use an intermediate layer of the pre-trained Inception V3 network 
as the feature extractor \citep{szegedy2016}. We trained a neural network on 
the extracted features with one hidden layer and $300$ hidden nodes. We used 
data augmentation (random rotation and horizontal flips of the images) on $D_1$ 
to reduce over-fitting. The regression dataset consists of 21,613 sale prices 
for homes in King County, Washington, between May 2014 and May 2015. There are 
19 covariates describing the features of the house which are used to predict 
log sale price. For each method we trained a neural network with one hidden 
layer, with 100 hidden nodes, and between 80 and 200 passes through the data.

\begin{figure*}[!htp]
\centerline{\includegraphics[trim=0 70 0 70, clip, width=1\textwidth]{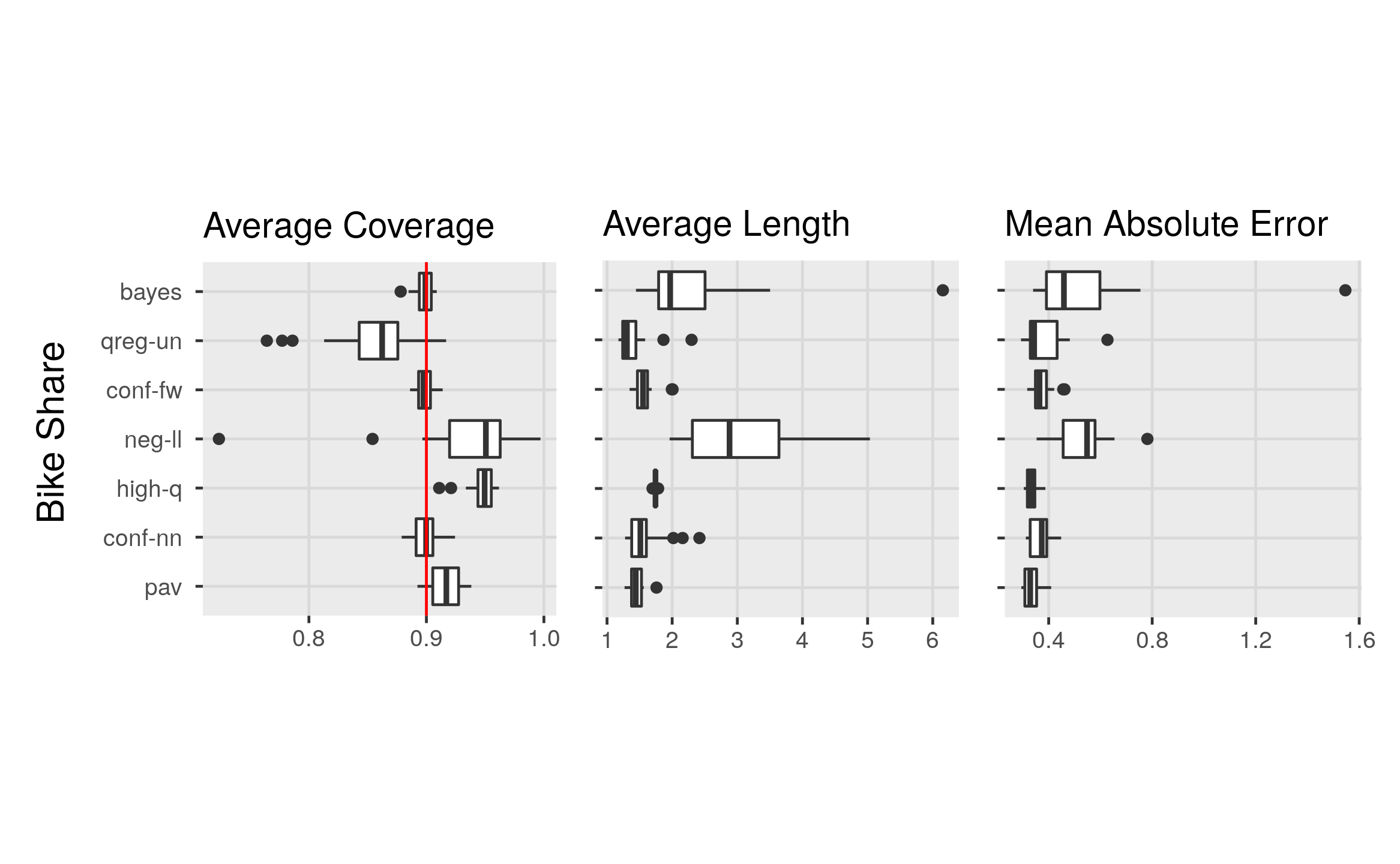}}
\centerline{\includegraphics[trim=0 70 0 95, clip, width=1\textwidth]{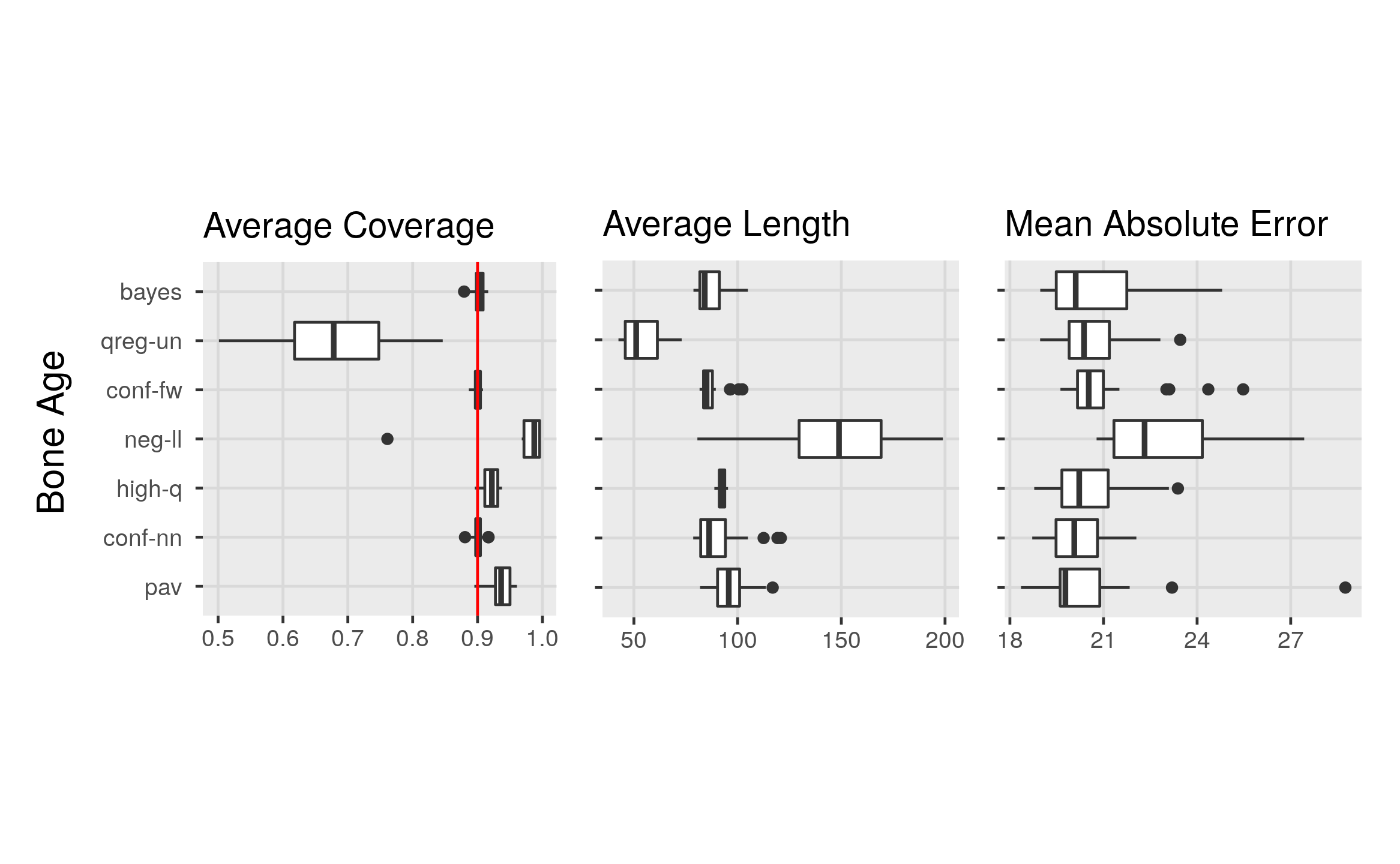}}
\centerline{\includegraphics[trim=0 70 0 95, clip, width=1\textwidth]{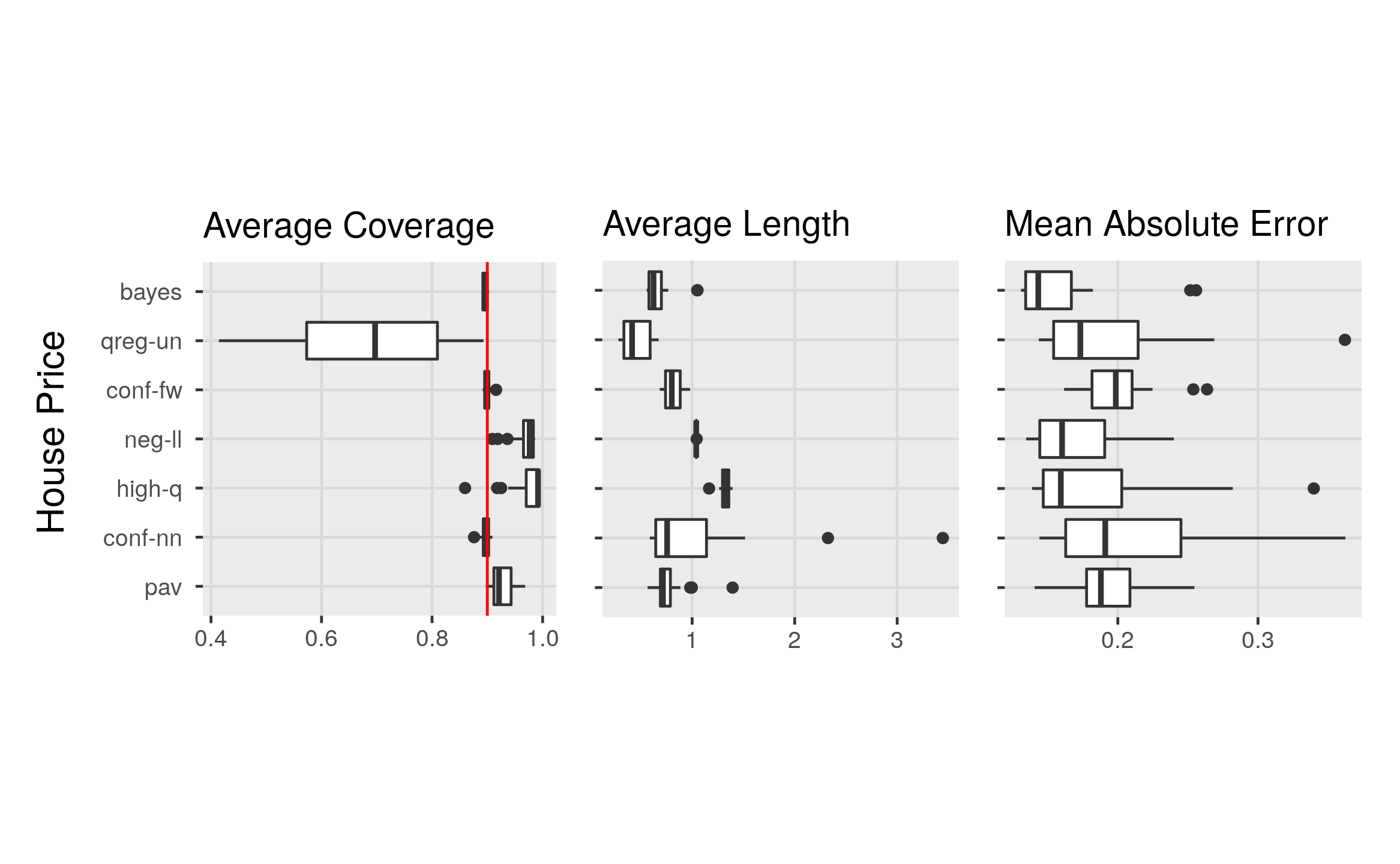}}
\centerline{\includegraphics[trim=0 70 0 95, clip, width=1\textwidth]{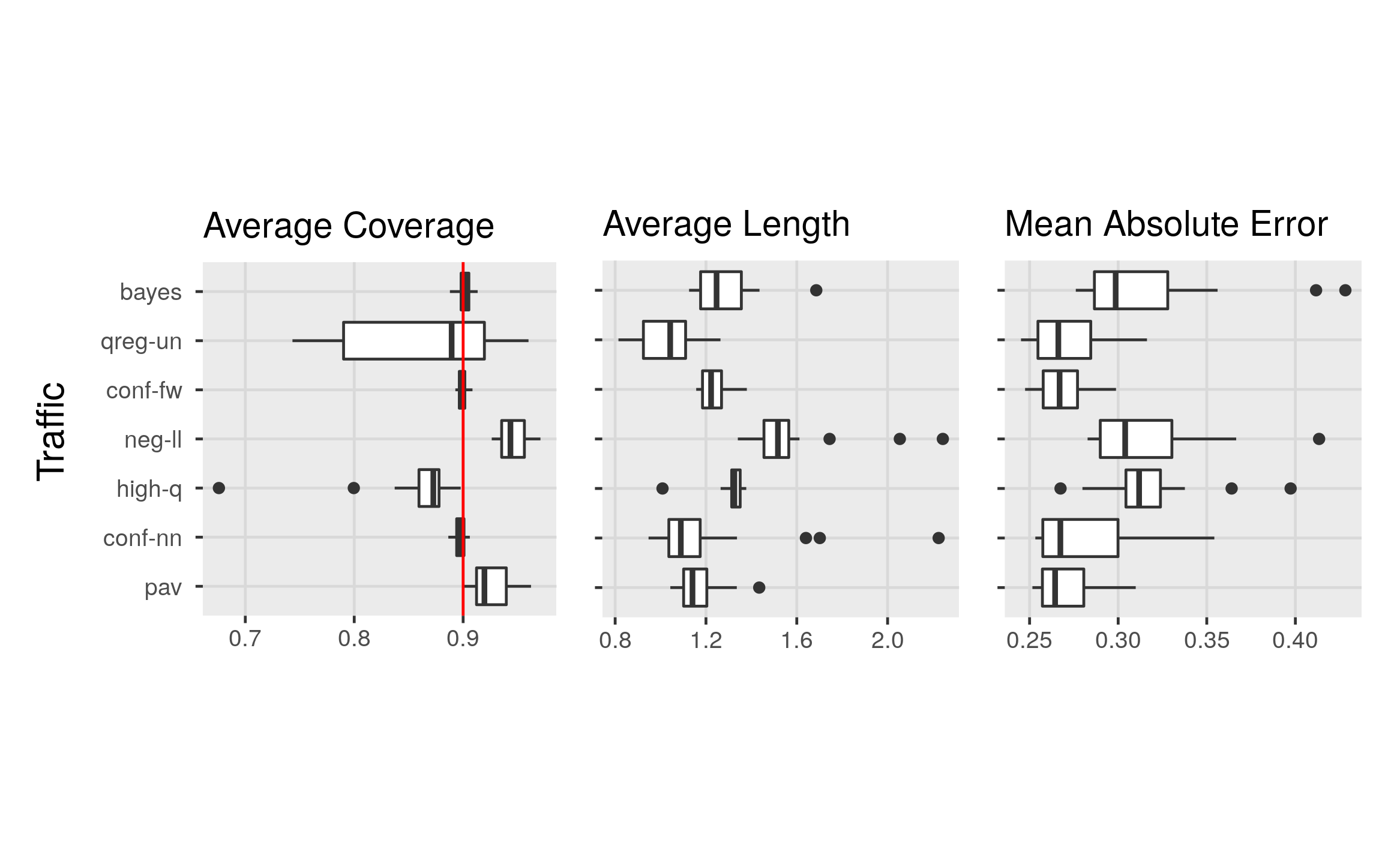}}
 \caption{Summary of results on real data. Each row corresponds to a different 
data set and are organized alphabetically. The red line in the left-most plots 
denote the nominal confidence level.}
 \label{fig:realData}
\end{figure*}

Figure \ref{fig:realData} summarizes the results on these data. It shows the 
empirical coverage, the average length, and the mean absolute error (MAE) of 
each method for predicting the target. Starting with average coverage, we see 
that \textit{conf-nn} and \textit{pav} provide average coverage at nearly exact 
nominal level for all repetitions and all datasets as guaranteed by our 
theorems. We can see that \textit{pav} is more conservative, due to its stronger 
guarantee, though potentially also due to the coarse selection grid for $\tau$. 
The unadjusted method, \textit{qreg-un}, does not provide coverage, which 
confirms our presumption that the neural network $N_\alpha(X)$ fails to 
accurately estimate the conditional quantiles in finite samples. This also 
strongly affirms the necessity to adjust neural network based intervals. 
\textit{bayes} also provides precise coverage and the remaining methods primarily 
over-cover, partitulcarly \textit{neg-ll}. We see that \textit{high-q} does not 
provide consistent coverage on the Traffic data and found it to be 
highly-sensitive to hyperparameters.

Among the valid methods, \textit{conf-nn} and \textit{pav} are always among the
shortest intervals. \textit{bayes} performs significantly worse on the Bike Share and 
traffic datasets, with many repetitions having double or triple the length and 
error of \textit{conf-nn} and \textit{pav}. While \textit{conf-fw} 
provides surprisingly short intervals on average, we note that in the presence 
of heteroscedasticity it can occur that inferior fixed-width intervals are
narrower, on average, than the optimal intervals. \textit{high-q} is slightly 
wider and \textit{neg-ll} is considerably wider on average, also showing they 
they can be too conservative.

In terms of the estimation error \textit{conf-nn}, \textit{pav} and 
\textit{conf-fw} have similar performance on all datasets. We emphasize this as 
\textit{conf-fw} fits a neural network that minimizes mean absolute error 
directly, while \textit{conf-nn} and \textit{pav} both use the loss function 
given (\ref{eqn:PI-loss}). This demonstrates that our proposed loss function 
does not comprise the predictive accuracy of the fitted networks.

\section{Conclusion}

In this paper we demonstrated how to use standard statistical 
techniques such as sample splitting and quantile regression to fit a 
neural network that outputs accurate predictions and valid prediction intervals. 
We proposed two procedures with provable coverage guarantees that improve
current state-of-the-art methods in terms of interval length and predictive 
accuracy. The ease and transparency of the two procedures advocate their 
application to many deep neural networks in order to express the 
uncertainty of their predictions.
% In this paper we demonstrate the power and ease of use of sample splitting 
% approaches. We envision the PI-Network as a commonplace addition to many deep 
% networks in order to express the uncertainty of their predictions without 
% compromising predictive accuracy. 
As data sets grow in size, the cost of using a 
validation set decreases, making the observation of \citet{Barnard74} even more 
accurate today:
\begin{quote}
   The simple idea of splitting a sample into two and then developing the 
hypothesis on the basis of one part and testing it on the remainder may perhaps 
be said to be one of the most seriously neglected ideas in statistics, if we 
measure the degree of neglect by the ratio of the number of cases where a method 
could give help to the number of cases where it is actually used.
\end{quote}

\newpage
\bibliography{bibliography.bib}

\begin{thebibliography}{}

\bibitem[Alipanahi et~al., 2015]{alipanahi2015}
Alipanahi, B., Delong, A., Weirauch, M.~T., and Frey, B.~J. (2015).
\newblock Predicting the sequence specificities of dna-and rna-binding proteins
  by deep learning.
\newblock {\em Nature biotechnology}, 33(8):831.

\bibitem[Barnard, 1974]{Barnard74}
Barnard, G.~A. (1974).
\newblock Discussion of ``cross-validatory choice and assessment of statistical
  predictions,'' by m. stone.
\newblock {\em Journal of the Royal Statistical Society: Series B
  (Methodological)}, 36(2):133--135.

\bibitem[Bauer and Kohler, 2017]{bauer2017}
Bauer, B. and Kohler, M. (2017).
\newblock On deep learning as a remedy for the curse of dimensionality in
  nonparametric regression.
\newblock {\em Submitted for publication}.

\bibitem[Collobert and Weston, 2008]{collobert2008}
Collobert, R. and Weston, J. (2008).
\newblock A unified architecture for natural language processing: Deep neural
  networks with multitask learning.
\newblock In {\em Proceedings of the 25th international conference on Machine
  learning}, pages 160--167. ACM.

\bibitem[Cybenko, 1989]{cybenko1989}
Cybenko, G. (1989).
\newblock Approximation by superpositions of a sigmoidal function.
\newblock {\em Mathematics of control, signals and systems}, 2:303--314.

\bibitem[Dua and Graff, 2017]{UCI}
Dua, D. and Graff, C. (2017).
\newblock {UCI} machine learning repository.

\bibitem[Fanaee-T and Gama, 2013]{BikeShare}
Fanaee-T, H. and Gama, J. (2013).
\newblock Event labeling combining ensemble detectors and background knowledge.
\newblock {\em Progress in Artificial Intelligence}, pages 1--15.

\bibitem[Gal and Ghahramani, 2016]{GalG16}
Gal, Y. and Ghahramani, Z. (2016).
\newblock Dropout as a bayesian approximation: Representing model uncertainty
  in deep learning.
\newblock In {\em international conference on machine learning}, pages
  1050--1059.

\bibitem[Goodfellow et~al., 2016]{goodfellow2016}
Goodfellow, I., Bengio, Y., and Courville, A. (2016).
\newblock {\em Deep Learning}.
\newblock MIT Press.

\bibitem[He, 1997]{xuming1997}
He, X. (1997).
\newblock Quantile curves without crossing.
\newblock {\em The American Statistician}, 51:186--192.

\bibitem[Heskes, 1997]{heskes1997}
Heskes, T. (1997).
\newblock Practical confidence and prediction intervals.
\newblock In {\em Advances in neural information processing systems}, pages
  176--182.

\bibitem[Hinton et~al., 2012]{hinton2012}
Hinton, G., Deng, L., Yu, D., Dahl, G., Mohamed, A.-r., Jaitly, N., Senior, A.,
  Vanhoucke, V., Nguyen, P., Kingsbury, B., et~al. (2012).
\newblock Deep neural networks for acoustic modeling in speech recognition.
\newblock {\em IEEE Signal processing magazine}, 29.

\bibitem[Hornik, 1991]{hornik1991}
Hornik, K. (1991).
\newblock Approximation capabilities of multilayer feedforward networks.
\newblock {\em Neural networks}, 4:251--257.

\bibitem[Karpathy et~al., 2014]{karpathy2014}
Karpathy, A., Toderici, G., Shetty, S., Leung, T., Sukthankar, R., and Fei-Fei,
  L. (2014).
\newblock Large-scale video classification with convolutional neural networks.
\newblock In {\em Proceedings of the IEEE conference on Computer Vision and
  Pattern Recognition}, pages 1725--1732.

\bibitem[{Keren} et~al., 2018]{KerenCS18}
{Keren}, G., {Cummins}, N., and {Schuller}, B. (2018).
\newblock Calibrated prediction intervals for neural network regressors.
\newblock {\em IEEE Access}, 6:54033--54041.

\bibitem[Khosravi et~al., 2011]{khosravi2011}
Khosravi, A., Nahavandi, S., Creighton, D., and Atiya, A.~F. (2011).
\newblock Lower upper bound estimation method for construction of neural
  network-based prediction intervals.
\newblock {\em IEEE transactions on neural networks}, 22:337--346.

\bibitem[Koenker, 2005]{Koenker05}
Koenker, R. (2005).
\newblock {\em Quantile Regression}.
\newblock Econometric Society Monographs. Cambridge University Press.

\bibitem[Krizhevsky et~al., 2012]{krizhevsky2012}
Krizhevsky, A., Sutskever, I., and Hinton, G.~E. (2012).
\newblock Imagenet classification with deep convolutional neural networks.
\newblock In {\em Advances in neural information processing systems}, pages
  1097--1105.

\bibitem[Kuleshov et~al., 2018]{kuleshov2018}
Kuleshov, V., Fenner, N., and Ermon, S. (2018).
\newblock Accurate uncertainties for deep learning using calibrated regression.
\newblock {\em arXiv preprint arXiv:1807.00263}.

\bibitem[Lakshminarayanan et~al., 2017]{LakPBak17}
Lakshminarayanan, B., Pritzel, A., and Blundell, C. (2017).
\newblock Simple and scalable predictive uncertainty estimation using deep
  ensembles.
\newblock In {\em Advances in Neural Information Processing Systems}, pages
  6402--6413.

\bibitem[Lei et~al., 2018]{Lei+18}
Lei, J., G{\rq}Sell, M., Rinaldo, A., Tibshirani, R.~J., and Wasserman, L.
  (2018).
\newblock Distribution-free predictive inference for regression.
\newblock {\em Journal of the American Statistical Association},
  113(523):1094--1111.

\bibitem[MacKay, 1992]{Mac92}
MacKay, D.~J. (1992).
\newblock A practical bayesian framework for backpropagation networks.
\newblock {\em Neural computation}, 4(3):448--472.

\bibitem[Nix and Weigend, 1994]{nix1994}
Nix, D.~A. and Weigend, A.~S. (1994).
\newblock Estimating the mean and variance of the target probability
  distribution.
\newblock In {\em Proceedings of 1994 IEEE International Conference on Neural
  Networks (ICNN'94)}, volume~1, pages 55--60. IEEE.

\bibitem[Papadopoulos and Haralambous, 2011]{Pap11}
Papadopoulos, H. and Haralambous, H. (2011).
\newblock Reliable prediction intervals with regression neural networks.
\newblock {\em Neural Networks}, 24(8):842--851.

\bibitem[Paszke et~al., 2017]{paszke2017}
Paszke, A., Gross, S., Chintala, S., Chanan, G., Yang, E., DeVito, Z., Lin, Z.,
  Desmaison, A., Antiga, L., and Lerer, A. (2017).
\newblock Automatic differentiation in pytorch.

\bibitem[Pearce et~al., 2018]{Pearce+18}
Pearce, T., Brintrup, A., Zaki, M., and Neely, A. (2018).
\newblock High-quality prediction intervals for deep learning: A
  distribution-free, ensembled approach.
\newblock In {\em ICML}.

\bibitem[Romano et~al., 2019]{romano2019}
Romano, Y., Patterson, E., and Cand{\`e}s, E.~J. (2019).
\newblock Conformalized quantile regression.
\newblock {\em arXiv preprint arXiv:1905.03222}.

\bibitem[Szegedy et~al., 2016]{szegedy2016}
Szegedy, C., Vanhoucke, V., Ioffe, S., Shlens, J., and Wojna, Z. (2016).
\newblock Rethinking the inception architecture for computer vision.
\newblock In {\em Proceedings of the IEEE conference on computer vision and
  pattern recognition}, pages 2818--2826.

\bibitem[Tagasovska and Lopez{-}Paz, 2018]{TagL18}
Tagasovska, N. and Lopez{-}Paz, D. (2018).
\newblock Frequentist uncertainty estimates for deep learning.
\newblock {\em CoRR}, abs/1811.00908.

\bibitem[Valiant, 1984]{valiant1984}
Valiant, L.~G. (1984).
\newblock A theory of the learnable.
\newblock In {\em Proceedings of the sixteenth annual ACM symposium on Theory
  of computing}, pages 436--445. ACM.

\bibitem[Vovk et~al., 2005]{Vovk05}
Vovk, V., Gammerman, A., and Shafer, G. (2005).
\newblock {\em Algorithmic Learning in a Random World}.
\newblock Springer-Verlag, Berlin, Heidelberg.

\bibitem[Wooldridge, 2001]{Wooldridge01}
Wooldridge, J.~M. (2001).
\newblock {\em Econometric Analysis of Cross Section and Panel Data}, volume~1
  of {\em MIT Press Books}.
\newblock The MIT Press.

\end{thebibliography}

\end{document}